\newcommand{\tabincell}[2]{\begin{tabular}{@{}#1@{}}#2\end{tabular}}
\newtheorem{thm}{Theorem}
\title{Learning with Group Noise}
\author{
Qizhou Wang\textsuperscript{\rm 1,2,}\thanks{Equal contribution.}, 
Jiangchao Yao\textsuperscript{\rm 3,}\footnotemark[1],
Chen Gong\textsuperscript{\rm 2,4,}\thanks{Corresponding authors. Emails: chen.gong@njust.edu.cn, bhanml@comp.hkbu.edu.hk. }, \\
Tongliang Liu\textsuperscript{\rm 5}, 
Mingming Gong\textsuperscript{\rm 6},
Hongxia Yang\textsuperscript{\rm 3},
Bo Han \textsuperscript{\rm 1,}\footnotemark[2] \\
}
\begin{document}
\maketitle

\begin{abstract}
Machine learning in the context of noise is a challenging but practical setting to plenty of real-world applications. Most of the previous approaches in this area focus on the pairwise relation (casual or correlational relationship) with noise, such as learning with noisy labels. However, the group noise, which is parasitic on the coarse-grained accurate relation with the fine-grained uncertainty, is also universal and has not been well investigated. The challenge under this setting is how to discover true pairwise connections concealed by the group relation with its fine-grained noise. To overcome this issue, we propose a novel Max-Matching method for learning with group noise. Specifically, it utilizes a matching mechanism to evaluate the relation confidence of each object (\textit{cf.} Figure~\ref{fig:motivation}) w.r.t. the target, meanwhile considering the Non-IID characteristics among objects in the group. Only the most confident object is considered to learn the model, so that the ﬁne-grained noise is mostly dropped. The performance on a range of real-world datasets in the area of several learning paradigms demonstrates the effectiveness of Max-Matching.
\end{abstract}

\vspace{-8pt}
\section{Introduction}
The success of machine learning is closely related to the availability of data with accurate relation descriptions. However, the data quality usually cannot be guaranteed in many real-world applications, \textit{e.g.}, image classification~\cite{li2017webvision}, machine translation~\cite{belinkov2017synthetic}, and object recognition~\cite{yang2020distilling}. To overcome this issue, learning from cheap but noisy assignments has attracted intensive attention. Especially, in the recent years, lots of works have contributed to learning with label noise~\cite{xia2020parts,han2020sigua,chen2020robustness}.

Nevertheless, most of the previous works focus on the pairwise relation with noise as characterized in Figure~\ref{fig:motivation}(a). For notion simplicity, we call it \emph{pairwise noise}. Another type of noise, which is implicitly parasitic on the weak relations as illustrated in Figure~\ref{fig:motivation}(b)-(d), is also general but has not been well investigated. We specially term it \emph{group noise} based on the two following characteristics: 1) it occurs in the group whose coarse-grained relation to the target is correct, while the fine-grained relation of each object in the group to the target might be inaccurate; 2) it is not proper to independently consider fine-grained noisy relations like Figure~\ref{fig:motivation}(a), since objects in one group exhibit  strong Non-IID characteristics. Correlation analysis for each group of objects can help us discover  better evidences to this type of noise. In the following, we enumerate some examples about group noise.




\begin{figure}
\includegraphics[width=3in]{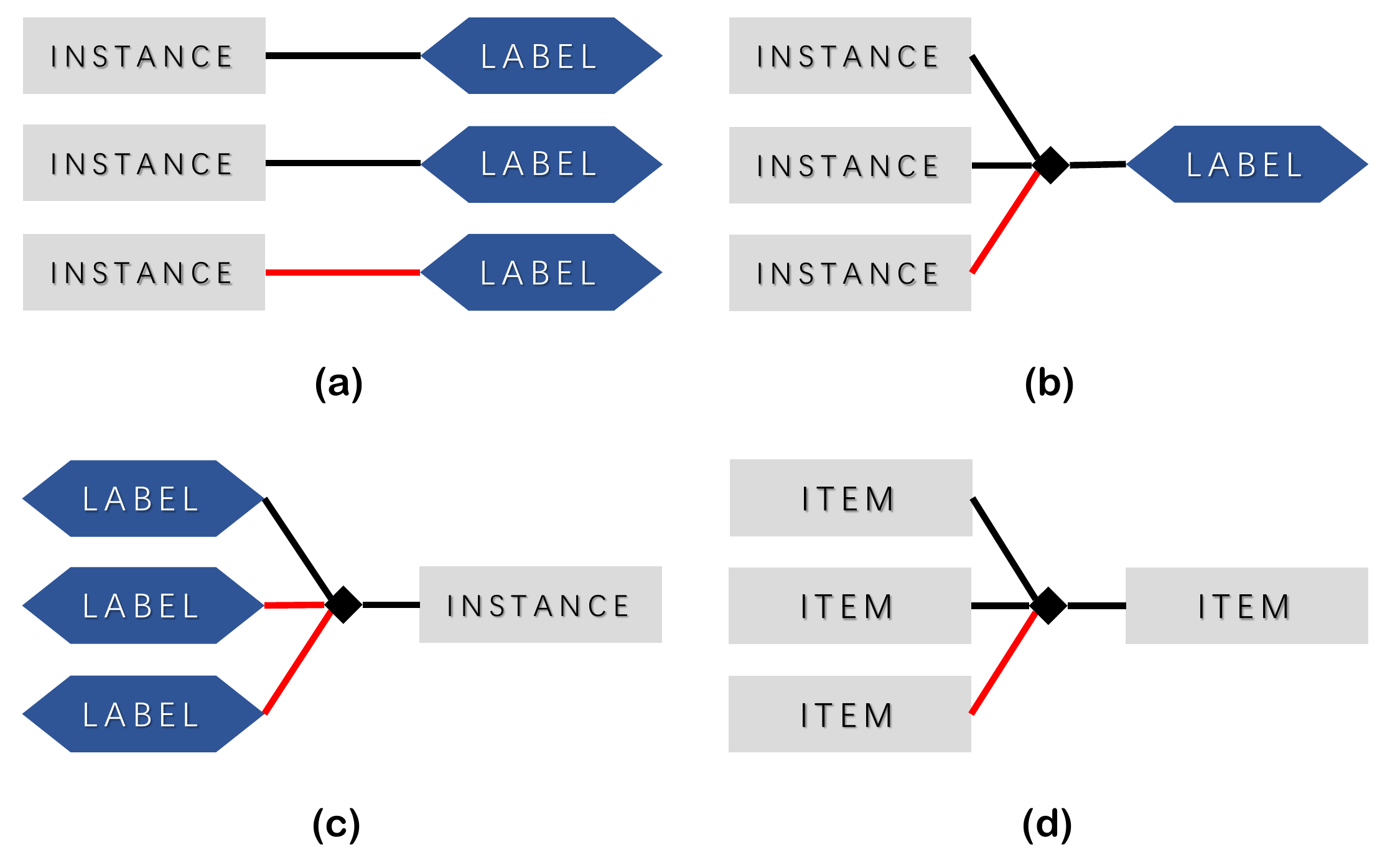}
\centering
\caption{Illustration of the supervised learning with pairwise noise (a) and three settings with group noise (b)-(d), in which the objects are realized by instances, labels, and items respectively. In the figure, black lines represent the correct relations, while red lines mean the incorrect relations.}
\label{fig:motivation}
\end{figure}

\begin{itemize}
    \item Figure~\ref{fig:motivation}(b): In region-proposal-based object localization, each group is a set of regions of one image. Given the image-level label, we aim to find its corresponding regions and remove the irrelevant and background parts. Here, a group of instances (or regions) are weakly supervised by a category label, while some instances are mismatched to this category. This is like the multiple-instance learning problem in the case of the instance-level classification~\cite{liu2012key}, not the more popular bag-level classification~\cite{maron1998framework}. 
   
    \item Figure~\ref{fig:motivation}(c): In face naming, characters' faces may appear simultaneously in a screenshot from the TV serials, and each face is assigned with a set of candidate names in the script or in dialogue. Under this setting, only one name (or label) in the group is correct to the face and all the candidates are correlated due to relationship among characters. From these data, we are to determine the true name of each face, which has also been viewed as a partial-label learning problem~\cite{gong2017regularization}. 
    
    \item Figure~\ref{fig:motivation}(d): In recommender system, item-based collaborative filtering~\cite{sarwar2001item} is a classical method. It builds upon that  the co-occurrence information of item pairs is relatively reliable. However, due to the uncertainty of user behavior in e-commerce, it exists that the historical items are irrelevant to the subsequent items. This introduces the group noise when we consider the sequence of each user as a group for the next-item, leading to the deterioration of applying the NeuralCF model with the fine-grained pairwise relations~\cite{he2017neural}.
\end{itemize}

Although several works more or less explore this type of scenarios, they are usually tailored to their ultimate goals and may distort the characteristics of group noise. For example, previous multiple-instance learning, which considers the instance-level modeling~\cite{settles2008multiple,pao2008based}, may make the strong IID assumption about the instances in the group. Partial-label learning methods~\cite{zhang2016partial} suppose the equal confidence of the candidate labels or model the ground-truth as a latent variable, which might not be very effective. 
Besides, all these works do not explicitly construct the denoising mechanism to avoid the influence of group noise.
 
In this paper, we investigate the problem of learning with group noise, and introduce a novel Max-Matching approach. Specifically, it consists of two parts, a matching mechanism and a selection procedure. The matching mechanism leverages the \emph{pair matching} to evaluate the confidence of relation between the object and the target, meanwhile adopts the \emph{group weighting} to further consider the Non-IID property of objects in the group. The final matching scores are achieved by combining the pair matching and the group weighting, of which the results evaluate both each 
fine-grained relation pair and the object importance in the group. Then, the selection procedure chooses the most confident relation pair to train the model, which at utmost avoids the influence of the irrelevant and mismatched relations. The whole model is end-to-end trained and Figure~\ref{fig:mm} illustrates the structure of Max-Matching. We conduct a range of experiments, and the results indicate that the proposed method can achieve superior performance over baselines from three different learning paradigms with group noise in Figure~\ref{fig:motivation}. 

\vspace{-5pt}
\section{Related Works} 
\subsection{Learning with Pairwise Noise}
For learning with pairwise noise, researchers mainly focus on instances with error-prone labels~\cite{frenay2013classification,algan2019image}, where the noise occurs in pairwise relations between individual instances to their assigned labels. By making assumptions on label assignment, robust loss functions~\cite{manwani2013noise,ghosh2017robust,han2018co,can} and various consistent learning algorithms are proposed~\cite{liu2015classification,han2018masking,xia2019anchor,ags,yao2020dual}.  

\vspace{-5pt}
\subsection{Learning with Group Noise}

For learning with group noise, we have a group of objects collectively connected to the target with the coarse-grained guarantees but the fine-grained uncertainty. Several previous methods, in Multiple-Instance Learning (MIL), Partial-Label Learning (PLL), and Recommender System (RS), have mediately investigated this problem. 

\textbf{MIL} probably is one of the most illustrative paradigms about group noise, of which the supervision is provided for a bag of instances. In MIL, prediction can either be made for bags or individuals, respectively termed as the bag-level prediction and instance-level prediction. For bag-level prediction, many works estimate instance labels as an intermediate step~\cite{ray2005supervised,settles2008multiple,wang2018revisiting}.

However, as suggested by~\cite{vanwinckelen2016instance}, the MIL methods designed for bag classification are not optimal for the instance-level tasks. The methods for instance-level prediction are only studied in the minority but close to the problem of our paper. Existing methods are devised based on key instance detection~\cite{liu2012key}, label propagation~\cite{kotzias2015group}, or unbiased estimation~\cite{peng2019address} with  IID assumptions.

\textbf{PLL}  also relates to the problem of learning with group noise, where each instance is assigned with a group of noisy labels, and only one of them is correct. To avoid the influence of the group noise, two general methodologies, namely, the average-based strategy and the detection-based approach, are proposed. 
The average-based strategy usually treats candidate labels equally, and then adapts PLL to the general supervision techniques~\cite{hullermeier2006learning,cour2011learning,wu2018towards}. The detection-based methods aim at revealing the true label among the candidates, mainly through label confidence learning~\cite{zhang2016partial}, maximum margin~\cite{yu2016maximum}, or alternating optimization~\cite{zhang2015solving,feng2019partial,yao2020deep}. Above methods do not explicitly build the denoising mechanism, which might not be effective in learning with group noise. 

\textbf{RS} targets to recommend the points of interest for users given their historical behaviors. In e-commerce, item-based collaborative filtering~\cite{sarwar2001item,linden2003amazon} has been used as a popular technique, which discovers new items based on the similar ones. It builds upon that the item relation is relatively reliable, so that the unseen true correlations between items can be learned via matrix factorization~\cite{mnih2008probabilistic}, auto-decoders~\cite{sedhain2015autorec}, or deep models~\cite{huang2013learning,xue2017deep,he2017neural,cui2018variational}. Unfortunately, in practice, it is not very easy to accurately construct the such pairwise relation for training, especially in the interest-varying user click sequences. Although more advanced studies mine the multiple interests of users and sequential behavior analysis~\cite{hidasi2018recurrent,wu2019session} to acquire benefits, the effect of group noise has not been well studied yet. Our experiments reveal that eliminating the group noise from the user click sequences for the next-item can effectively improve the performance.

\subsection{Preliminary}

Assume that we have a source set $\mathcal{X}$ and a target set $\mathcal{Y}$. For example, in classification tasks, $\mathcal{X}$ and $\mathcal{Y}$ can be considered as the sample set and the label set respectively. Ideally, we have the collection $S=\left\{(x_i,y_i)\right\}_{i=1}^{n}$ ($n$ is the sample size) for training, where the source object $x_i\in\mathcal{X}$ connects to the target $y_i\in\mathcal{Y}$ via the true pairwise relation. For generality,  we use $f:\mathcal{X}\rightarrow\mathbb{R}^d$ and $g:\mathcal{Y}\rightarrow\mathbb{R}^d$ to map both the objects in $\mathcal{X}$ and $\mathcal{Y}$ into the embedding space. Then, the solution is formulated as the following problem:
\begin{equation}
    f^*,g^*\leftarrow\mathop{\arg\min}_{f,g} \sum\limits_{i=1}^{n} \ell(f(x_i),g(y_i)),
    \label{eq:metric_pair}
\end{equation}
where $\ell:\mathbb{R}^d\times\mathbb{R}^d\rightarrow\mathbb{R}^{+}$ is a proper loss function. After training, the optimal mapping functions are used to make various prediction tasks, such as classification or retrieval. 

However, in many real-world situations, group-level data acquisition is cheaper, in which a group of source objects are collectively connected to a target. Unfortunately, as shown in Figure~\ref{fig:motivation}, some objects in the group can be irrelevant to the target regarding the pairwise relations. This forms the problem of learning with group noise, where we have to handle the noise that is parasitic on $S_\text{group}=\left\{(\bar{X}_i,y_i)\right\}_{i=1}^{n}$. 
Here, 
$\bar{X}_i=\{\bar{x}_{i1},\ldots,\bar{x}_{iK}\}\in\mathcal{X}^K$
contains a set of source objects collectively related to a target object $y_i\in\mathcal{Y}$. Note that $\bar{x}_{ik}$ is different from $x_{ik}$ regarding the notation, indicating there may exist $\bar{x}_{ik} \in \bar{X}_i$, such that $(\bar{x}_{ik},y_i)\notin S$, \textit{i.e.}, $\bar{x}_{ik}$ is mismatched to the target $y_i$ in terms of the pairwise relation. In this setting, we aim at devising a novel objective function $\ell_\text{group}:\mathbb{R}^{d\times K}\times\mathbb{R}^d\rightarrow\mathbb{R}^{+}$ such that
\begin{equation}
     f^*,g^*\leftarrow\mathop{\arg\min}_{f,g} \sum\limits_{i=1}^{n} \ell_\text{group}(F(\bar{X}_i),g(y_i))
     \label{eq:metric_group}
\end{equation}
can find the same optimal mapping functions $f^*,g^*$ as in Eq.~\eqref{eq:metric_pair}, where $F(\bar{X}_i)=\{f(\bar{x}_{i1}),\ldots,f(\bar{x}_{iK})\}$ denotes the set of embedding features. After training, the evaluation is still implemented on the individual pairwise connections between the source object and the target object.

\subsection{Max-Matching}
We need to modify the original loss functions such that the classifier learned with group noise can converge to the optimal one learned without any noise. 

In this section, we introduce a novel method, namely, Max-Matching, for learning with group noise. It consists of two parts, a matching mechanism and a selection procedure. The matching mechanism jointly considers the following two aspects of relation: 
1) the pairwise relation of the source objects to the target; 
2) the relation among the source objects in the group. 
Accordingly, the correctness of the pairwise relations as well as object correlations in the group are revealed by each matching score. Subsequently, based on the results given by the matching mechanism, the selection procedure chooses the best matched object to optimize the model. The group noise can mostly be removed, since the selected object is at utmost guaranteed to be correct regarding its pairwise relation to the target object, and other less 
confident objects in the group are not considered. Formally, the objective function $\ell_{\text{group}}$ of Max-Matching is, 
\begin{align}
    -\max\limits_{\bar{x}_{ik}\in \bar{X}_i} \{\log\underbrace{ \hat{P}\left(y_i|\bar{x}_{ik};f,g\right)}_{\text{Pair Matching}}+ \log\underbrace{ \hat{P}\left(\bar{x}_{ik}|\bar{X}_i;f,g\right)}_{\text{Group Weighting}}\}, \label{eq:loss_max_matching}
\end{align}
where $\hat{P}(\cdot)$ denotes the estimated probability. Note that, the two terms in Eq.~\eqref{eq:loss_max_matching} are equally combined\footnote{Non-equal combination with proper tradeoff may lead to better performance, which is left for future exploration in our work.} 
and they are interdependent in the training phase. 
The second term helps the reliable pairwise relation to be identified, and the first term also boosts the weighting measure to be learned. 

In the following, we explain the intuition behind Eq.~\eqref{eq:loss_max_matching}. In learning with group noise, we have no explicitly clean pairwise relations that can be directly used for training. Therefore, we inevitably build a weighting schema to measure the importance of the data in the group, which we assume is $\hat{P}(\bar{x}_{ik}|\bar{X}_i;f,g)$. Then, following the law of the total probability, it might be possible to decompose $\hat{P}(y_i|\bar{X}_i;f,g)$ into a probabilistic term \textit{w.r.t.} the target for 
each object $\bar{x}_{ik}\in\bar{X}_i$ combined with $\hat{P}(\bar{x}_{ik}|\bar{X}_i;f,g)$. However, the optimization obstacle caused by the integral will prohibit this choice.
In this case, Eq.~\eqref{eq:loss_max_matching} is an alternative approximation to this goal, which we use the following theorem to formulate.
\begin{thm}
Assume $\bar{X}_i=\{\bar{x}_{i1},\ldots,\bar{x}_{iK}\}\in\mathcal{X}^K$ collectively connects to the target $y_i$, where there is at least one true pairwise relation $(\bar{x}_{ik},y_i)$ and some possible pairwise relation noise. Then, optimizing Eq.~\eqref{eq:loss_max_matching} is approximately optimizing all pairwise relations with weights to learn the optimal mapping functions $f^*$ and $g^*$.
\end{thm}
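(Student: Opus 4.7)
The plan is to show that Eq.~\eqref{eq:loss_max_matching} is a max-form surrogate for $-\log\hat P(y_i|\bar X_i;f,g)$, and that this marginal likelihood decomposes into a weighted combination of all pairwise probabilities $\hat P(y_i|\bar x_{ik};f,g)$ with mixing weights $\hat P(\bar x_{ik}|\bar X_i;f,g)$. Once the link between the max and the weighted sum is established, minimizing Eq.~\eqref{eq:loss_max_matching} will coincide, up to a bounded slack, with the weighted pairwise likelihood objective, which recovers $f^*,g^*$ thanks to the assumption that at least one pair in the group is correct.

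First, I would expand, via the law of total probability,
\begin{equation*}
\hat P(y_i|\bar X_i;f,g) \;=\; \sum_{k=1}^{K} \hat P(y_i|\bar x_{ik},\bar X_i;f,g)\,\hat P(\bar x_{ik}|\bar X_i;f,g),
\end{equation*}
and argue that $\hat P(y_i|\bar x_{ik},\bar X_i;f,g)=\hat P(y_i|\bar x_{ik};f,g)$: once the representative member $\bar x_{ik}$ is selected, the remaining group objects carry no additional information about $y_i$. This produces the weighted form $\sum_k \hat P(y_i|\bar x_{ik})\hat P(\bar x_{ik}|\bar X_i)$, which is precisely ``all pairwise relations with weights''. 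Note that, since at least one pair is correct by assumption, this weighted objective is consistent in the sense that placing all mass on the true index recovers the noise-free target $\log \hat P(y_i|\bar x_{ik^*})$ of Eq.~\eqref{eq:metric_pair}.

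Next, I would apply the elementary sandwich
\begin{equation*}
\max_k (a_k b_k) \;\le\; \sum_{k=1}^{K} a_k b_k \;\le\; K\,\max_k (a_k b_k),
\end{equation*}
with $a_k=\hat P(y_i|\bar x_{ik};f,g)$ and $b_k=\hat P(\bar x_{ik}|\bar X_i;f,g)$. Taking logs yields
\begin{equation*}
\log\max_k(a_k b_k) \;\le\; \log\hat P(y_i|\bar X_i;f,g) \;\le\; \log K + \log\max_k(a_k b_k).
\end{equation*}
Hence the summand of Eq.~\eqref{eq:loss_max_matching}, namely $\max_k\{\log a_k+\log b_k\}$, agrees with $\log\hat P(y_i|\bar X_i;f,g)$ up to an $f,g$-independent constant $\log K$, so minimizing Eq.~\eqref{eq:loss_max_matching} minimizes the weighted pairwise NLL within a bounded gap.

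The main obstacle is making \emph{approximately} quantitative: in the worst case the two objectives can disagree by $\log K$. I would close the gap by arguing that, as training progresses, the group-weighting term concentrates on the correct index $k^*$ (which exists by assumption), so $\sum_k a_k b_k \to a_{k^*} b_{k^*} = \max_k a_k b_k$ and the slack vanishes. Establishing this concentration — showing that the pair-matching term $\hat P(y_i|\bar x_{ik})$ and the group-weighting term $\hat P(\bar x_{ik}|\bar X_i)$ co-sharpen at the optimum so that the sandwich collapses to an equality — is the real technical hurdle in promoting the bound to an equivalence and concluding that $f^*,g^*$ are indeed recovered.
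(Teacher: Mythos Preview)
Your proposal is correct and follows essentially the same route as the paper: both arguments expand $\hat P(y_i|\bar X_i)$ via the law of total probability into the weighted sum $\sum_k \hat P(y_i|\bar x_{ik})\hat P(\bar x_{ik}|\bar X_i)$ and then compare this sum to its largest summand, identifying Eq.~\eqref{eq:loss_max_matching} as a surrogate. The paper, however, records only the one-sided inequality $\max_k(a_kb_k)\le\sum_k a_kb_k$ and simply asserts that maximizing this lower bound ``yields the optimization of the first line''; your sandwich $\max_k(a_kb_k)\le\sum_k a_kb_k\le K\max_k(a_kb_k)$ and the ensuing $\log K$ slack make the word \emph{approximately} quantitative, and your explicit conditional-independence step $\hat P(y_i|\bar x_{ik},\bar X_i)=\hat P(y_i|\bar x_{ik})$ surfaces an assumption the paper leaves implicit. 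The concentration argument you flag as the ``real technical hurdle'' is exactly the point the paper glosses over, so your version is strictly more careful while remaining the same proof in spirit.
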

\begin{proof}
According to the law of total probability, the log-likelihood on the coarse-grained relation $(\bar{X}_i, y_i)$ has a following decomposition and the lower-bound approximation,
\begin{align}
    & \log{\sum_{\bar{x}_{ik}\in \bar{X}_i}\hat{P}(y_i|\bar{x}_{ik};f,g)\hat{P}(\bar{x}_{ik}|\bar{X}_i;f,g)}  \nonumber \\
    & \geq \log{\max\limits_{\bar{x}_{ik}\in \bar{X}_i}\left\{\hat{P}(y_i|\bar{x}_{ik};f,g)\hat{P}(\bar{x}_{ik}|\bar{X}_i;f,g)\right\}}  \\
    & = \max\limits_{\bar{x}_{ik}\in \bar{X}_i}\left\{\log \hat{P}(y_i|\bar{x}_{ik};f,g) + \log\hat{P}(\bar{x}_{ik}|\bar{X}_i;f,g) \right\} \nonumber
\end{align}
The first line in the above deduction can be considered as a weighted counterpart of Eq~\eqref{eq:metric_pair} in the setting of group noise. The last line, \textit{i.e.}, Eq.~\eqref{eq:loss_max_matching}, is its lower bound, which alleviates the optimization obstacle caused by integral. Optimizing such a lower bound yields the optimization of the first line, and progressively makes the learning procedure approach the optimal mapping functions $f^*$ and $g^*$.
\end{proof}

Due to the adverse impact of group noise, $\hat{P}(y_i|\bar{x}_{ik};f,g)$ may still memorize some pairwise relation noise. In this case, the second term $\hat{P}(\bar{x}_{ik}|\bar{X}_i;f,g)$ can leverage the non-IID characteristics of the objects in the group to sufficiently capture their correlation, and distinguish the irregular noise by measuring their importance regarding the group. Besides, the max-pooling operation in Eq.~\eqref{eq:loss_max_matching} guarantees that only the most confident object is used, reducing the risk of group noise as much as possible.

\subsection{Implementation}
\begin{figure}
    \centering
    \includegraphics[width=2.8in]{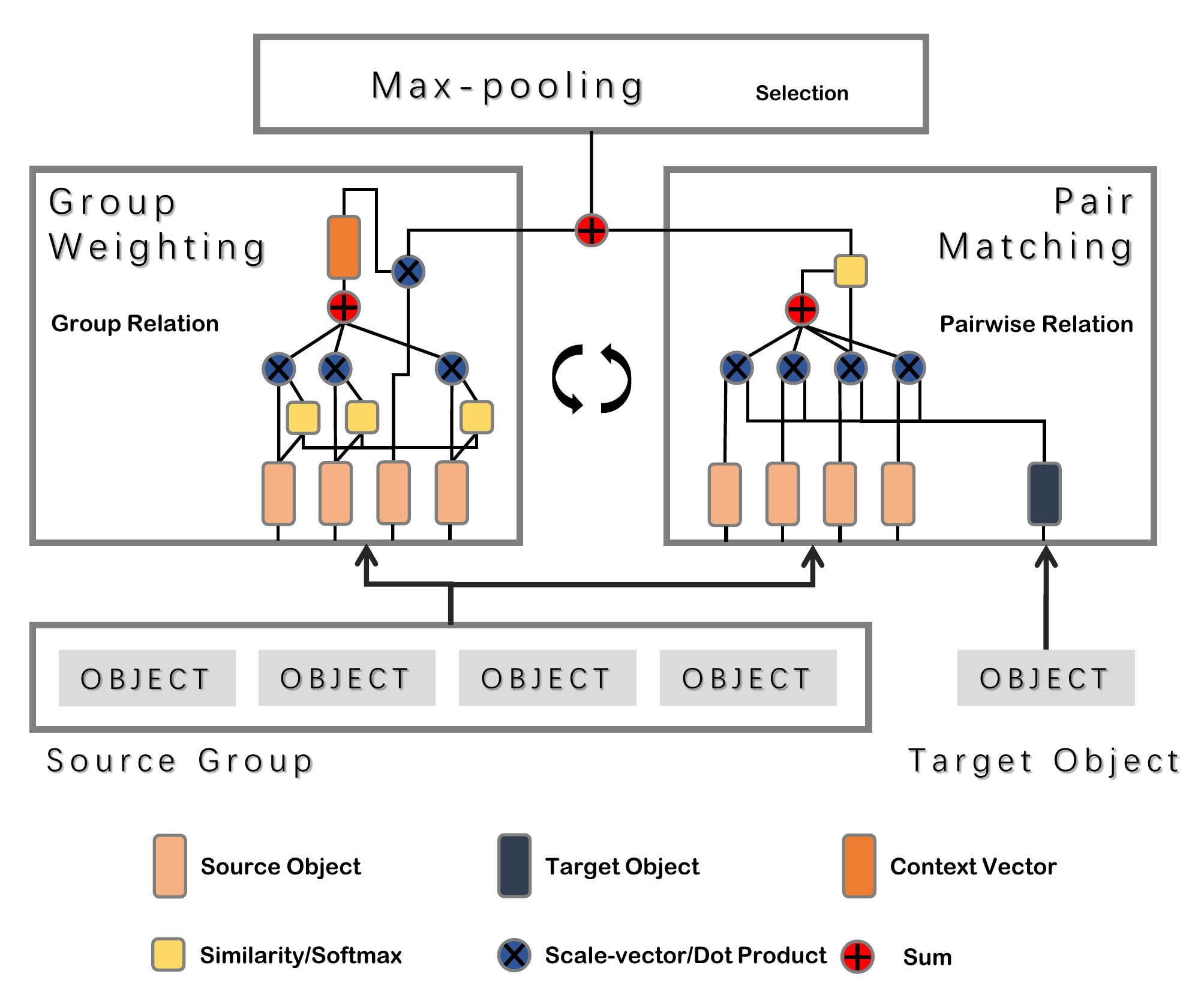}
    \caption{Max-Matching. The \emph{pair matching} evaluates the confidence of individual connections between source objects and the target. The \emph{group weighting} captures the object correlations in the group by measuring their importance. They are combined to form the final matching scores, followed by a max-pooling selection to choose the trustworthy object regarding the target. Group weighting and pair matching are interdependent and benefit from each other during training. }
    \label{fig:mm}
\end{figure}

In this section, we give our implementation of Eq.~\eqref{eq:loss_max_matching} in detail. First, the term $\hat{P}\left(y_j|\bar{x}_{ik};f,g\right)$ is named as the \emph{pair matching}, as it is the probability of matching between the source object $\bar{x}_{ik}$ and the target object $y_i$. 
It is estimated by the Softmax on the inner product of their embedding vectors, constructed as follows: 
\begin{equation}
    \hat{P}(y_i|\bar{x}_{ik};f,g) = \frac{\exp\{f(\bar{x}_{ik})^\top g(y_i)\}}{\sum_{y\in \mathcal{Y}}\exp\{f(\bar{x}_{ik})^\top g(y)\}}. \label{eq:softmax}
\end{equation}
The second term $\hat{P}(\bar{x}_{ik}|\bar{X}_i;f,g)$ aims to capture the object correlation by measuring the importance of $\bar{x}_{ik}$ regarding the group. It is termed as a \emph{group weighting} mechanism, as it assigns different weights for pair matching regarding the group. 
Accordingly, the Non-IID property in the group is considered, since the group weighting is essentially designed as a cluster-aware weighting method. Note that, the weights can either be calculated based on the embedding features $f(\bar{x})$ or the probabilistic features $\hat{P}(y|\bar{x};f,g)$.
To unify these two operations together, we denote the mapping function $h(\cdot)$ for the input features of the group weighting with the similarity measurement $\mathcal{S}(\cdot,\cdot)$. Then, the group weighting $\hat{P}(\bar{x}_{ik}|\bar{X}_i;f,g)$ is calculated by following steps: 
\begin{itemize}
    \item a) Measuring the similarity of the object $\bar{x}_{ik}$ with all other objects in the group (denoted by $\bar{x}'_{i1},\ldots,\bar{x}'_{i,K-1}$):
\begin{align}
    s_{\bar{x}_{ik}} =\left[\mathcal{S}(h(\bar{x}_{ik}),h(\bar{x}'_{i1})),\ldots,\mathcal{S}(h(\bar{x}_{ik}),h(\bar{x}'_{i,K-1}))\right]^\top,
    \nonumber
\end{align}
    and normalizing by Softmax $\tilde{s}_{\bar{x}_{ik}} = \operatorname{Softmax}(s_{\bar{x}_{ik}})$;
    \item b) Calculating the final weight of the object $\bar{x}_{ik}$ in the group with Sigmoid:
    \begin{align}
        \hat{P}(\bar{x}_{ik}|\bar{X}_i;f) = \operatorname{Sigmoid}\left(\mathcal{S}({c}_{\bar{x}_{ik}},h(\bar{x}_{ik}))\right),
        \label{eq:selfattention}
    \end{align}
    where ${c}_{\bar{x}_{ik}}$ is the context vector w.r.t. the object $\bar{x}_{ik}$, calculated by ${c}_{\bar{x}_{ik}} = \sum_{l=1}^{K-1}\tilde{s}_{\bar{x}_{ik},l}h(\bar{x}'_{il})$.
\end{itemize} 
The context vector $c_{\bar{x}_{ik}}$ is constructed by the weighted sum of all the other objects in the group, in which the weights $\tilde{s}_{\bar{x}_{ik}}$ assign the higher values for those objects similar to $\bar{x}_{ik}$. Intuitively, the context vector resembles the original $\bar{x}_{ik}$ if there exists plenty of objects in the group that are similar to the object $\bar{x}_{ik}$. A large value of group weighting (or a large $\mathcal{S}(c_{\bar{x}_{ik}},h(\bar{x}_{ik}))$) indicates that the object $\bar{x}_{ik}$ deserves more attention regarding its owning group $\bar{X}_i$.

By mixing the pair matching and the group weighting, we have the final matching score that evaluates the object confidence regarding the target as well as the group. A large value of the matching score generally indicates the corresponding object is trustworthy in its fine-grained relation to the target. The selection procedure is then deployed upon the matching mechanism via a simple max-pooling operation. It selects the object that is the most confident in terms of the pairwise relationship, and the irrelevant objects can be dropped. The model structure is summarized in Figure~\ref{fig:mm}.

\section{Experiments}

\subsection{Experimental Settings}

\begin{table}[t]
    \centering
    \small
    
    \begin{tabular}{c|cc|cc|cc}
        \toprule
         & \multicolumn{2}{c|}{Object} & \multicolumn{2}{c|}{Function} & \multicolumn{2}{c}{Weighting} \\
         \cline{2-7}
          & $\mathcal{X}$ & $\mathcal{Y}$ &  $f(\cdot)$  & $g(\cdot)$ & $h(\cdot)$ & $\mathcal{S}(\cdot,\cdot)$ \\
         \hline
         MIL & ins & lab   & ide & emb & Eq.~\eqref{eq:softmax} & neg-KL\\
         PLL & lab  & ins  & emb & lin & $f(\cdot)$ & dot \\
         RS  & item & item & emb & emb & $f(\cdot)$ & dot   \\
         \toprule
    \end{tabular}
    \caption{The Specification of Max-Matching on three types of learning settings with group noise.}
    \label{tab:max-matching settings}
\end{table}

To demonstrate the effectiveness of Max-Matching, we conduct extensive experiments in three representative learning settings with group noise, including MIL, PLL, and RS.  Table~\ref{tab:max-matching settings} summarizes their specifications regarding sample sets (\textit{i.e.,} $\mathcal{X}, \mathcal{Y}$), mapping functions (\textit{i.e.,} $f,g$), and group weighting (\textit{i.e.,} $\mathcal{S},h$). Therein, ``ins'', ``lab'', and ``item'' respectively denotes the instance with features, the label, and the item ID. Moreover, ``emb'' represents the embedding function that maps discrete category labels or item IDs to the embedding space; ``lin'' is a linear function for the instances with normalized features; and ``ide'' is the identity function.

\begin{figure*}[!t]
    \centering
    \includegraphics[width=7in]{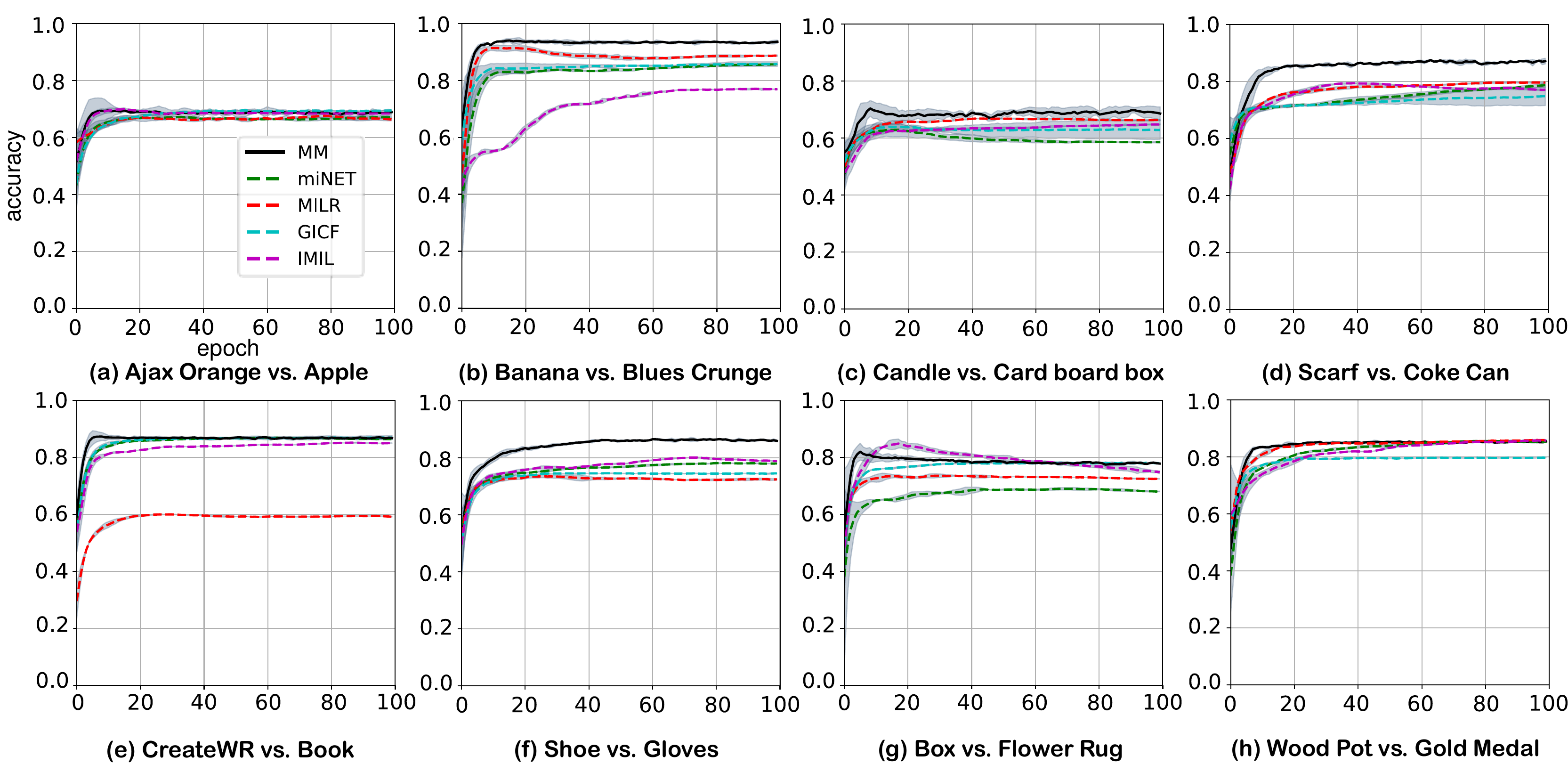}
    \vspace{-11pt}
    \caption{The test accuracy curves on \emph{SIVAL} for learning with group noise. Colored curves show the mean accuracy of 5 trials and shaded bars denote  standard deviation.}
    \label{fig:sival}
\end{figure*}

Our specification in MIL degenerates Eq.~\eqref{eq:softmax} into a linear function with Softmax, and its outputs are the inputs of group weighting with negative KL-divergence (neg-KL) as the similarity metric. By contrast, in PLL, instances and labels are both mapped, where Max-Matching can explore the non-IID characteristics of labels in the embedding space, and dot product (dot) is adopted as a proper metric. Similar deliberation holds for RS to measure the confidence of matching in the embedding space.  Moreover, we implement Max-Matching using PyTorch, the Adam~\cite{kingma2014adam} is adopted with the learning rate selected from $\{10^{-1}, \cdots, 10^{-4}\}$, and the methods are run for 50 epochs.

\subsection{Application to Multiple-Instance Learning}

In this section, we focus on the MIL setting, where we aim to learn an instance classifier given instances with only bag labels.  Here, instances in the bag that may deviate from their bag labels introduce group noise. 

The experiments are conducted on an object localization dataset \emph{SIVAL}~\cite{rahmani2005localized} in the literature of MIL, as it provides instance-level annotations for evaluation. We compare Max-Matching with two state-of-the-arts that focus on the instance classification, IMIL~\cite{peng2019address} and GICF~\cite{kotzias2015group}; two strong baselines that estimate instance labels in an intermediate step for bag classification, MILR~\cite{ray2005supervised} and miNET~\cite{wang2018revisiting}. Since the baselines only focus on binary classification, we use the data of each adjacent classes to construct the binary classification datasets. Each dataset is then partitioned into 8:1:1 for training, validation, and test. 

The experimental curves in terms of the test accuracy are 
illustrated in Figure~\ref{fig:sival} with 5 individual trials. From them, we find Max-Matching achieves superior performance over the baselines in most cases. For two bag-level prediction methods, the test accuracy is not very competitive since they implicitly consider the instance labels in the bag. As suggested by~\cite{carbonneau2018multiple}, the instance-level performance cannot be guaranteed for MIL methods that only focus on the coarse-grained bag labels. For two instance-level methods, although they generally show better performance than MILR and miNET, they are still inferior to Max-Matching, since they fail to sufficiently leverage the correlation among objects in the group. The results demonstrate the effectiveness of our method in learning with group noise.

\begin{table}[t] 
    \centering
    \small
    
    \begin{tabular}{c|c|c|c}
    \toprule
        & \tabincell{c}{Accuracy} & \tabincell{c}{Selection}  & \tabincell{c}{non-IID} \\
        \hline
        Pairwise   & 0.302$\pm$0.005 & $\times$ & $\times$      \\
        Matching   & 0.324$\pm$0.002 & $\times$ & $\checkmark$   \\
        Maximizing & 0.315$\pm$0.006 & $\checkmark$ & $\times$ \\
        \hline
        Max-Matching &  \textbf{0.368$\pm$0.005} & $\checkmark$ & $\checkmark$ \\
    \bottomrule
    \end{tabular}
    \caption{Average test accuracy and standard deviation in learning with group noise on \emph{SIVAL}.}
    \label{tab:sival}
\end{table}

\begin{figure*}[!t]
    \centering
    \includegraphics[width=7in]{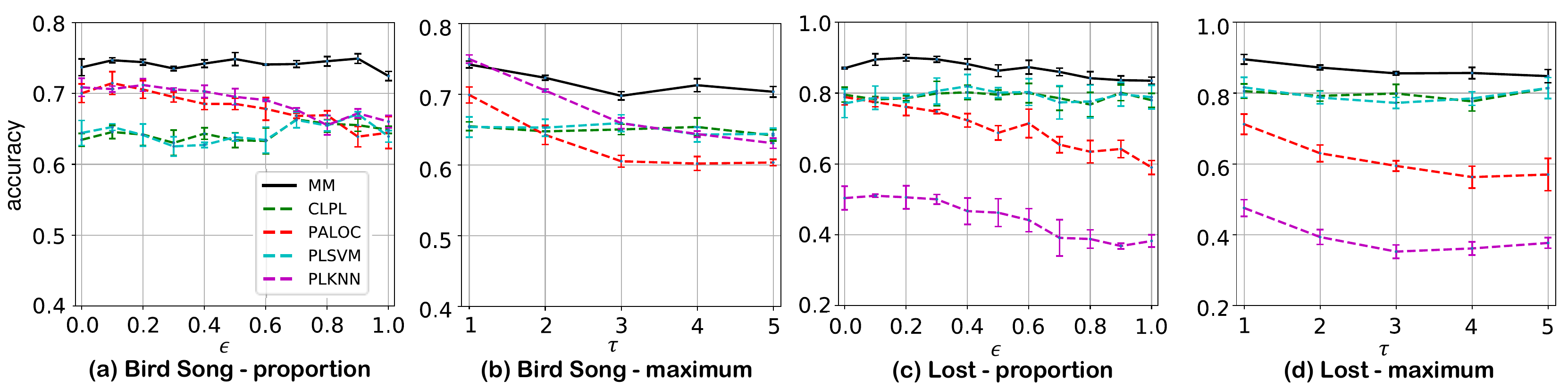}
    \vspace{-25pt}
    \caption{Test accuracy curves on PLL datasets for learning with group noise. Colored curves show the mean accuracy of 5 trials and error bars denote standard deviation.
    Therein, $\epsilon$ denotes the proportion of instances that are partially labeled, and $\tau$ is the maximum numbers of partial labels. }
    \label{fig:pll_acc} 
\end{figure*}

Furthermore, we conduct multi-class classification experiments on \emph{SIVAL}, since our method is not restricted to the binary classification. To show the advantages of the matching mechanism and the selection procedure in Max-Matching, we leverage following three baselines for the ablation study: 
\begin{itemize}
    \item \textbf{Pairwise}: Taking the group label $y_i$ as the label for each instance $\bar{x}_{ik}$ in the group $\bar{X}_i$, the objective can be written as
    $\sum_{i=1}^{n}\sum_{k=1}^{K} -\log \hat{P}(y_i|\bar{x}_{ik};f,g)$.
    \item \textbf{Matching}: Taking the matching scores of individuals in the group as the objective directly, which means
    $\sum_{i=1}^{n}\sum_{k=1}^{K}-\log \hat{P}(y_i|\bar{x}_{ik};f,g)-\log \hat{P}(\bar{x}_{ik}|\bar{X}_i;f)$.
    \item \textbf{Maximizing}: Selecting the most confident instance $\bar{x}_{ik}$ only in terms of the pairwise relation to the target, namely,
    $\sum_{i=1}^{n}-\max_{\bar{x}_{ik}\in \bar{X}_i} \log \hat{P}(y_i|\bar{x}_{ik};f,g)$.
\end{itemize}
The test accuracy with 5 individual trials for Max-Matching and three baselines are summarized in Table~\ref{tab:sival}. Accordingly, ``Pairwise'' achieves the worst test accuracy, sine the model directly fits the group noise and the Non-IID property of the group is simply ignored. Plugging the selection mechanism (``Maximizing") can generally perform better, and the similar result occurs in ``Matching'' that explores the non-IID property in the group. In comparison, Max-Matching, which both considers the object correlation and the pairwise relation, can significantly outperform all these baselines. Actually, we also find that tuning the trade-off between Maximizing and Matching can achieve further improvement. Therefore, it is possible to acquire a better performance to select a proper weight for two terms in Max-Matching.

\subsection{Application to Partial Label Learning}

\begin{table}
    \centering
    
    \small
    \vspace{3pt}
    \begin{tabular}{c||c|c|c|c|c}
        \toprule
        & \tabincell{c}{\emph{FG-}\\\emph{NET}} & \tabincell{c}{\emph{MSRC}\\\emph{v2}} & \tabincell{c}{\emph{Bird}\\\emph{Song}} &  \tabincell{c}{\emph{Yahoo!}\\\emph{News}} & \emph{Lost} \\
        \hline\hline
        PLKNN & \tabincell{c}{0.059$\pm$\\0.005} & \tabincell{c}{0.446$\pm$\\0.007} & \tabincell{c}{0.613$\pm$\\0.004} & \tabincell{c}{0.426$\pm$\\0.004} & \tabincell{c}{0.428$\pm$\\0.003}\\
        \hline
        PLSVM & \tabincell{c}{0.064$\pm$\\0.013} & \tabincell{c}{0.475$\pm$\\0.008} & \tabincell{c}{0.625$\pm$\\0.019} & \tabincell{c}{0.629$\pm$\\0.012} &  \tabincell{c}{0.801$\pm$\\0.025} \\
        \hline
        CLPL & \tabincell{c}{0.065$\pm$\\0.029} & \tabincell{c}{0.480$\pm$\\0.015} & \tabincell{c}{0.628$\pm$\\0.012} & \tabincell{c}{0.537$\pm$\\0.017} & \tabincell{c}{0.793$\pm$\\0.022} \\
        \hline
        PALOC & \tabincell{c}{0.054$\pm$\\0.005} & \tabincell{c}{0.463$\pm$\\0.011} & \tabincell{c}{0.598$\pm$\\0.020} & \tabincell{c}{0.434$\pm$\\0.0013} & \tabincell{c}{0.546$\pm$\\0.007}\\
        \hline
        \tabincell{c}{Max-\\Matching}             & \textbf{\tabincell{c}{0.110$\pm$\\0.021}} &
        \textbf{\tabincell{c}{0.517$\pm$\\0.007}} & \textbf{\tabincell{c}{0.642$\pm$\\0.010}} &
        \textbf{\tabincell{c}{0.647$\pm$\\0.005}} & \textbf{\tabincell{c}{0.823$\pm$\\0.025}}\\
        \toprule
    \end{tabular}
    \caption{The average test accuracy and its standard deviation on the PLL datasets in learning with group noise.}
    \label{tab:pll_testacc}
\end{table}

In this section, we validate Max-Matching in the setting of PLL, in which each instance is assigned with a set of candidate labels and only one of them is correct. 

The experiments are conducted on five PLL datasets from various domains: \emph{FG-NET}~\cite{panis2014overview} aims at facial age estimation; \emph{MSRCv2}~\cite{liu2012conditional} and \emph{Bird Song}~\cite{briggs2012rank} focus on object classification; \emph{Yahoo! News}~\cite{guillaumin2010multiple} and \emph{Lost}~\cite{cour2011learning} deal with face naming tasks. Each dataset is partitioned randomly into 8:1:1 for training, validation, and test. We compare Max-Matching with four popular PLL methods, including a non-parametric learning approach PLKNN~\cite{hullermeier2006learning}; a maximum margin based method PLSVM~\cite{nguyen2008classification}; a statistical consistent method CLPL~\cite{cour2011learning}; and a decomposition based approach PALOC~\cite{wu2018towards}.

The test accuracy of 5 individual trials for our method and baselines are reported in Table~\ref{tab:pll_testacc}. According to the results, PALOC shows extremely poor performance on datasets like \emph{Bird Song} and \emph{Lost}. This is because  it has no explicit denoising mechanism to avoid the influence of group noise. PLKNN also achieves relatively inferior results due to its strong assumption on the data distribution. Although PLSVM and CLPL can generally perform better, they still fail to explore the non-IID characteristics of candidate labels. In comparison, Max-Matching have the best performance among all these methods, as it further considers the correlations among the candidate labels. Notably, on \emph{FG-NET}, a challenging PLL dataset with a great many of strongly correlated candidate labels (7.48 partial labels per instance on average), Max-Matching is 4.37\% better than the second best method CLPL on average. 

To study the robustness of these methods in learning with different levels of group noise, we further conduct experiments on \emph{Lost} and \emph{Bird Song} with controlled proportion $\epsilon$ of partial labeled instances and controlled maximum numbers $\tau$ of partial labels. 
The test accuracy for varying $\epsilon$ and $\tau$ is summarized in Figure~\ref{fig:pll_acc}. Similar to the above results, PLKNN is unstable across these two datasets due to its assumption on data distribution. PALOC is also vulnerable to the group noise, and its accuracy drops quickly with the growth of $\epsilon$ and $\tau$. Although the performances are relatively stable for CLPL and PLSVM, their test accuracy is consistently inferior to Max-Matching. These results further demonstrate the effectiveness of Max-Matching in PLL.    

\subsection{Application to Recommender System} 
Finally, we conduct experiments of recommendation, which aims at recommending points of interest to the users, \textit{e.g.}, item recommendation in e-commerce. The classical item-based collaborative filtering~\cite{sarwar2001item} critically depends on the trustworthy pairwise relationship, which is not practical on e-commercial websites. Generally, due to the varying interests of the user, his/her historically visited items are not always relevant to the subsequent items. Then, taking the user click sequence as a group and the next item as the target, we have the coarse relation as Figure~\ref{fig:motivation}(d). As a result, we face the problem of learning with group noise when applying the item-based collaborative filtering. 

The offline experiments are implemented on a range of datasets from Amazon: \emph{Video}, \emph{Beauty}, and \emph{Game}. In each dataset, the visited items of each user are segmented into subsets with at most 6 items, where the last item of each subset is taken as the target, and the others are taken as the group with noise. For each user, we randomly take two subsets for validation and test, and the remaining data are used for training. In the experiments, we consider several classical and advanced baselines, including a simple method that ranks items according to their popularity and recommends new items regarding the co-occurrence, Pop; a popular collaborative filtering method, I-CF~\cite{linden2003amazon}; and two deep model based approaches that exploit the sequential behavior in the group, Caser~\cite{tang2018personalized} and Att~\cite{zhang2019next}. For Max-Matching, we recommend new items by ranking the probabilities $\hat{P}\left(y|x;f,~g\right)$, where $x$ can be the second last visited item (MM), or any item in the considered group (MM+)\footnote{Note that, MM+ is to compare the sequence-based recommendation methods Caser and Att which use all items in the group.}. Following~\cite{zhang2019next}, we report the performance on two widely used metrics, HIT@10 and NDCG@10. HIT@10 counts the fraction of times that the true next item is in the top-10 items, while NDCG@10 further assigns weights to the rank.

The average results of 5 individual trials in terms of the HIT@10 and NDCG@10 are summarized in Table~\ref{tab:amazon_results}. First, we compare MM with Pop and I-CF, which all recommend new items according to the last visited ones of users. Pop always shows extremely poor performance, as it is based on the popularity and cannot learn the correlations between items. While I-CF performs much better, it relies on the reliable pairwise relations without considering the group noise. By contrast, MM is robust to the fine-grained uncertain relations, which achieves the significant improvements. 
Second, we compare MM+ with Caser and Att, which are two recommendation approaches that can implicitly model the group relation. However, they mainly focus on the temporal behavior of users, making them fail to explicitly distinguish true relations from the irrelevant noise. By contrast, MM+ considers both the group relation and the denoising mechanism, and the experimental results on average demonstrate its effectiveness and rationality.

\begin{table}[t]
    \centering
    \small
    
    \begin{tabular}{c||c|c|c|c|c|c}
        \toprule
        & \multicolumn{2}{c|}{\emph{Video}} & \multicolumn{2}{c|}{\emph{Beauty}} & \multicolumn{2}{c}{\emph{Game}} \\
        \hline\hline
         & HIT & NDCG & HIT & NDCG & HIT & NDCG \\
        \hline
        Pop & \tabincell{c}{0.515} & \tabincell{c}{0.397} & \tabincell{c}{0.401} & \tabincell{c}{0.258} & \tabincell{c}{0.402} & \tabincell{c}{0.252} \\
        \hline
        I-CF & \tabincell{c}{0.622} & \tabincell{c}{0.420} & \tabincell{c}{0.429} & \tabincell{c}{0.285} & \tabincell{c}{0.405} & \tabincell{c}{0.298} \\
        \hline
        MM & \textbf{\tabincell{c}{0.692}} & \textbf{\tabincell{c}{0.471}} & \textbf{\tabincell{c}{0.543}} & \textbf{\tabincell{c}{0.381}} & \textbf{\tabincell{c}{0.495}} & \textbf{\tabincell{c}{0.332}} \\
        \hline\hline
        Caser & \tabincell{c}{0.643} & \tabincell{c}{0.425} & \tabincell{c}{0.523} & \tabincell{c}{0.345} & \tabincell{c}{0.493} & \tabincell{c}{0.311} \\
        \hline
        Att & \tabincell{c}{0.624} & \tabincell{c}{0.429} & \tabincell{c}{0.445} & \tabincell{c}{0.335} & \tabincell{c}{0.427} & \tabincell{c}{0.292} \\
        \hline
        MM+ & \textbf{\tabincell{c}{0.694}} & \textbf{\tabincell{c}{0.473}} & \textbf{\tabincell{c}{0.561}} & \textbf{\tabincell{c}{0.389}} & \textbf{\tabincell{c}{0.518}} & \textbf{\tabincell{c}{0.345}}\\
        \toprule
    \end{tabular}
    \caption{Average HIT@10 (HIT for short) and NDCG@10 (NDCG for short) with standard deviation on Amazon.}  
    \label{tab:amazon_results}
\end{table}

We also conduct online experiments by deploying Max-Matching to the recommender system on one e-commerce platform. Like many large-scale recommenders, it consists of two stages, the recall stage and the ranking stage. The recall stage generates the most relevant candidate items that are related to the visited items of users in the middle-scale. The ranking stage scores the candidates in a fine-grained granularity for the top-k recommendation. We deploy Max-Matching to the recall stage and compare our method with the online item-based NeuralCF~\cite{he2017neural}. Here, NeuralCF is supervised by the pairwise relations manually extracted from the user-click sequence. After one-week experiments, we achieved about $10\%$ improvement on click-through rate (CTR). Figure~\ref{fig:rs_ncf_mm} illustrates one example of the top-5 recommendation from NeuralCF and Max-Matching. According to the results, we can find the dress recommendation from NeuralCF is mixed with the shorts, which in fact, origins from the training with non-ideal pairwise relations. 
 
\begin{figure}
    \centering
    \includegraphics[width=3.in]{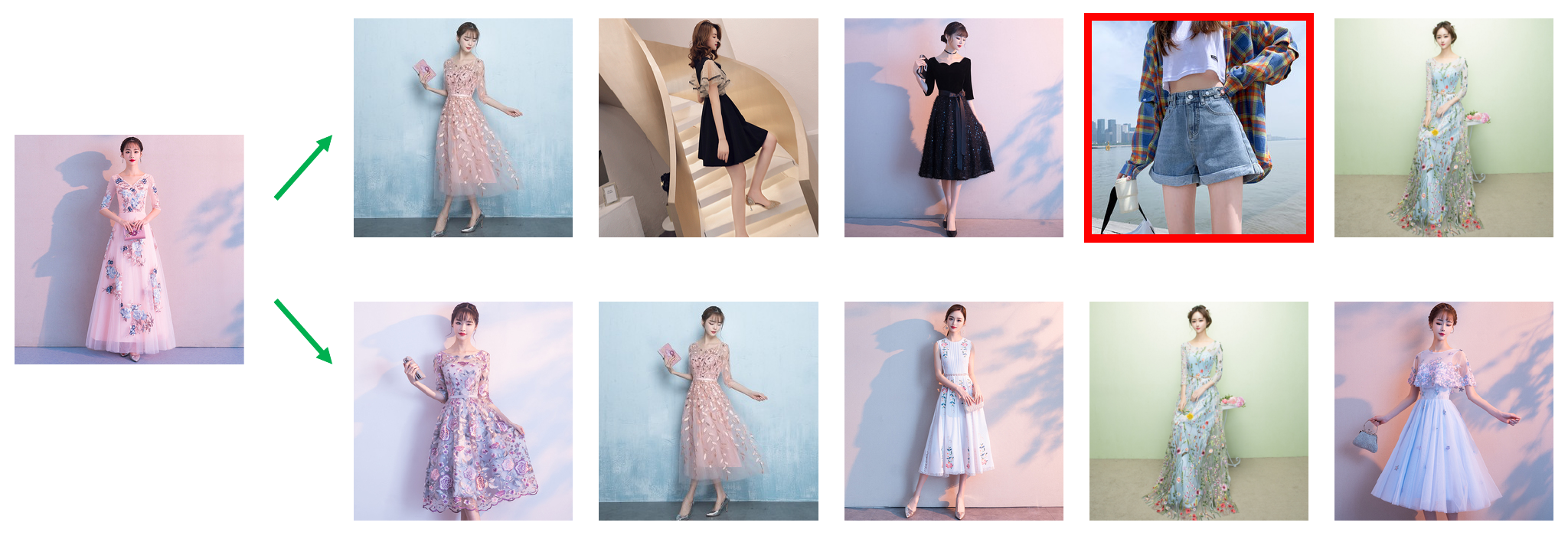}
    \vspace{-7pt}
    \caption{The top-5 recall examples from NeuralCF and Max-Matching given the user clicked dress items. The 5 candidates in the first row are recalled by NeuralCF and the 5 candidates in the second row are recalled by Max-Matching.}
    \label{fig:rs_ncf_mm}
\end{figure}
 
\section{Conclusion}
In this paper, we focus on the learning paradigm with group noise, where a group of correlated objects are collectively related to the target with fine-grained uncertainty. To handle the group noise, we propose a novel Max-Matching mechanism in selecting the most confident objects in the group for training, which considers both the correlation among the group as well as the pairwise matching to the target. The experimental results in three different learning settings demonstrate its effectiveness. In the future, we will generalize Max-Matching to handle the independent pairwise relations, e.g., learning with label noise, and explore a better trade-off between two terms in our objective.

\section{Acknowledgments}
    JCY and HXY was supported by NSFC No. U20A20222. CG was supported by NSFC No. 61973162, the Fundamental Research Funds for the Central Universities No. 30920032202, CCF-Tencent Open Fund No. RAGR20200101, the “Young Elite Scientists Sponsorship Program” by CAST No. 2018QNRC001, and Hong Kong Scholars Program No. XJ2019036. TLL was supported by Australian Research Council Project DE-190101473. BH was supported by the RGC Early Career Scheme No. 22200720, NSFC Young Scientists Fund No. 62006202, HKBU Tier-1 Start-up Grant, HKBU CSD Start-up Grant, and HKBU CSD Departmental Incentive Grant. 
    
\bibliography{main}
\end{document}